\newcommand{\D}{\mathrm{d}}
\newcommand{\EE}{\mathbb{E}}
\def\eqref#1{equation~(\ref{#1})}
\def\1{\bm{1}}
\DeclareMathAlphabet{\mathsfit}{\encodingdefault}{\sfdefault}{m}{sl}
\SetMathAlphabet{\mathsfit}{bold}{\encodingdefault}{\sfdefault}{bx}{n}
\def\sR{{\mathbb{R}}}
\newcommand{\E}{\mathbb{E}}
\newcommand{\R}{\mathbb{R}}
\theoremstyle{plain}
\newtheorem{thm}{Theorem}[section]
\theoremstyle{definition}
\theoremstyle{remark}
\newtheorem{rem}[thm]{Remark}
\def\bfX{\boldsymbol{X}}
\newcommand{\bbR}{\mathbb R}
\newcommand{\bbP}{\mathbb P}
\icmltitlerunning{Submission and Formatting Instructions for ICML 2023}
\begin{document}

\twocolumn[
\icmltitle{A Neural RDE approach for continuous-time  non-Markovian \\ stochastic control problems}




\begin{icmlauthorlist}
\icmlauthor{Melker Höglund}{imperial}
\icmlauthor{Emilio Ferrucci}{oxford}
\icmlauthor{Camilo Hernández}{princeton}
\icmlauthor{Aitor Muguruza Gonzalez}{imperial,kaiju}
\icmlauthor{Cristopher Salvi}{X,turing}
\icmlauthor{Leandro Sánchez-Betancourt}{kings}
\icmlauthor{Yufei Zhang}{lse}
\end{icmlauthorlist}

\icmlaffiliation{X}{Department of Mathematics, I-X, Imperial College London.}
\icmlaffiliation{imperial}{Department of Mathematics, Imperial College London.}
\icmlaffiliation{oxford}{Mathematical Institute, University of Oxford.}
\icmlaffiliation{princeton}{ORFE Department, Princeton University.}
\icmlaffiliation{turing}{The Alan Turing Institute.}
\icmlaffiliation{kaiju}{Kaiju Capital Management.}
\icmlaffiliation{kings}{Department of Mathematics, King's College London}
\icmlaffiliation{lse}{Department of Statistics, London School of Economics.}

\icmlcorrespondingauthor{Cristopher Salvi}{c.salvi@imperial.ac.uk}

\icmlkeywords{Machine Learning, ICML}


\vskip 0.3in
]



\printAffiliationsAndNotice{\icmlEqualContribution} 

\begin{abstract}
We propose a novel framework for solving continuous-time non-Markovian stochastic optimal  problems by means of \emph{neural rough differential equations} (Neural RDEs) introduced in \citet{morrill2021neural}.
Non-Markovianity naturally arises in control problems due to the time delay effects in the system coefficients or the driving noises, which   leads to   optimal control strategies  depending explicitly on the historical trajectories of the system state.
By modelling the control process as the solution of a Neural RDE driven by the state process, we show that the control-state joint dynamics are governed by an \emph{uncontrolled, augmented} Neural RDE, allowing for fast Monte-Carlo estimation of the value function via trajectories simulation and memory-efficient back-propagation. 
We provide theoretical underpinnings for the proposed algorithmic framework by   demonstrating  that Neural RDEs serve as universal approximators for functions of random rough paths.
Exhaustive numerical experiments on  non-Markovian stochastic control problems are presented, which reveal that   the proposed framework is time-resolution-invariant and 
 achieves higher accuracy and better stability in irregular sampling compared to existing RNN-based approaches.
\end{abstract}

\section{Introduction}

The field of stochastic control is concerned with problems where an agent interacts over time with some random environment through the action of a \textit{control}. In this setting, the agent seeks to select the control such that some objective depending on the trajectory of the system under their control and the choice of the control itself is optimised; commonly, as the system is stochastic, such an objective takes the form of an expectation of some pathwise cost or reward. The study of this class of problems 
has been successfully applied to many fields of modern sciences, including biology \citet{cucker2007emergent}, economics \citet{kamien2012dynamic}, engineering \citet{grundel2007cooperative}, finance \citet{pham2009continuous}, and more recently, epidemics control \citet{hubert2022incentives}.

Stochastic control is nowadays regarded as a well-established field of mathematics. Two main approaches govern the analysis: the stochastic maximum principle and the dynamic programming approach, see \citet{yong1999stochastic,pham2009continuous}. In either case, an agent is interested in characterising a set of optimal strategies, the dynamics of the system under such strategies, and the optimal value of the corresponding reward  functional. The two main sources of complexity for tackling these problems are: 1) the continuous-time nature of the underlying stochastic dynamics, and 2) the presence of memory yielding a non-negligible impact of the system's history on its future evolution.

Continuous-time non-Markovian stochastic control problems, where the evolution of the system depends on its history and not only on its current state, 
 have received an increasing amount of attention in recent years. 
 Non-Markovian system 
  provides a more faithful class of models to describe real-world phenomena than their Markov counterparts, where the (infinitesimal) displacement of the state dynamics depend  only on the current state.
Non-Markovianity naturally arises in control problems due to the time delay effects in the system coefficients or the driving noises, which   leads to the optimal control strategy being influenced by the historical trajectories of the system states.

Typical examples of continuous-time  non-Markovian stochastic control problems  include rough volatility models \citet{gatheral2018volatility} from quantitative finance in which the non-Markovianity stems from having a fractional Brownian motion as the driving noise. 
Fractional Brownian motion generalises   Brownian motion and  involves history-dependent increments. As a result, the state dynamics   driven by fractional Brownian motion exhibit non-Markovian behavior.
Another example   of non-Markovian problems are delayed control
problems, where memory is incorporated into the system by assuming path-dependence of the vector fields governing the dynamics (see Sec. \ref{sec:method} for a precise statement). Optimal decision with time delay  is ubiquitous in economics, for example in the study of growth models with delayed production or pension funds models, \citet{kydland1982time, salvatore2011stochastic}, in marketing for models of optimal advertising with distributed lag  effects \citet{gozzi2009controlled}, and in finance for portfolio selection under the market with memory and delayed responses \citet{oskendal2011optimal}. See also \citet{kolmanovski1996control} for modelling systems with after-effect in mechanics, engineering, biology, and medicine.

As the solution to a continuous-time non-Markovian stochastic control problem   is in general not known analytically, it is important to construct effective and robust numerical schemes for solving these control problems.
An essential numerical challenge is to effectively capture the nonlinear dependence of the optimal control strategy on the historical trajectories of the system states.



\paragraph{Contributions} Using the modern tool set offered by neural rough differential equations (Neural RDEs) \citet{morrill2021neural} ---  a continuous-time analogue to recurrent neural networks (RNNs) --- we propose a novel framework which, to the best of our knowledge, is the first numerical approach allowing to solve non-Markovian stochastic control problems in continuous-time. More precisely, we parameterise the control process as the solution of a Neural RDE driven by the state process, and show that the control-state joint dynamics are governed by an uncontrolled RDE with  vector fields parameterised by neural networks. We demonstrate how this formulation allows for trajectories sampling, Monte-Carlo estimation of the reward functional and backpropagation. To deal with sample paths of infinite 1-variation, which is necessary in stochastic control, we also extend the universal approximation result in \citet{kidger2020neural} to a probabilistic density result for Neural RDEs driven by random rough paths. The interpretation is that we are able to approximate continuous feed-back controls arbitrarily well in probability. Through various experiments, we demonstrate that  the proposed framework is time-resolution-invariant and capable of learning optimal controls with higher accuracy than traditional RNN-based approaches. 

The rest of the paper is organised as follows: in Sec. \ref{sec:related_work} we discuss some related work, in Sec. \ref{sec:method} we present our algorithmic framework  and the universal approximation  result of Neural RDEs, 
and in Sec. \ref{sec:experiments} we demonstrate the effectiveness of the algorithm through numerical experiments.

\section{Related work}\label{sec:related_work}

Over the last decade, a large volume of research has been conducted to solve Markovian stochastic control problems numerically using neural networks, either by directly parameterising the control and then sampling from the state process, such as done by \citet{han2016deep}, or by solving the  
partial differential equations 
(PDEs) or
backward stochastic differential equations
(BSDEs) associated with the problem; see \citet{germain2021neural} for a recent survey about neural networks-based algorithms for stochastic control and PDEs. We also mention two examples from the growing literature. The Deep BSDE model from \citet{han2017deep}, where the authors propose an algorithm to solve parabolic PDEs and BSDEs in high dimension and  think of the gradient of the solution as the policy function,  approximated with a neural network. The Deep Galerkin model \citet{sirignano2018dgm} is a mesh-free algorithm to solve PDEs associated with the value function of control problems; the authors approximate the solution with a deep neural network which is trained to satisfy the PDE differential operator, initial condition, and boundary conditions.

Recently, signatures methods \citet{lyons2014rough,kidger2019deep} have been employed for solving both Markovian and non-Markovian control problems in simplified settings \citet{kalsi2020optimal, cartea2022optimal}. This approach does not rely on a model underpinning the dynamics of the unaffected processes and has shown excellent results when solving a number of algorithmic trading problems.  However, this method has two main drawbacks: (i) it suffers from the curse of dimensionality — this happens when one wishes to compute signatures of a high-dimensional (more than five) process to make online decisions, and (ii) it requires that the flow of information observed by the controller is unaffected by the control and everything else the controller observes can be explicitly constructed from such information and the policy. We also point out the theoretical contribution by \citet{DFG17} studying control problems where the driving noise is a random rough path.

The approach of directly parameterising the control and training by sampling trajectories from the system was recently studied in the setting of delay-type non-Markovian stochastic control by \citet{han2021recurrent}. Specifically, the control is taken to be a Long Short-Term Memory (LSTM) recurrent neural network with the discrete simulated values of the state process as input, so as to capture the path-dependence of the problem. The method is shown to outperform a baseline parameterisation using a fully-connected feed-forward network taking as input a segment of the history of the sample path, and demonstrated to have theoretical advantages in handling non-Markovian problems.

Neural RDEs, as popularised by  \citet{kidger2020neural, morrill2021neural} provide an elegant way of modelling temporal dynamics by parameterising the vector fields of some classes of differential equations by neural networks. The input to such models is a multivariate time series interpolated into a continuous path $X$. Depending on the level of (ir)regularity of $X$, the corresponding equation can be solved in different ways. In \citep{kidger2020neural}, $X$ is assumed differentiable almost everywhere, and the equation becomes an ordinary differential equation (ODE) that can be evaluated numerically via a call to an ODE solver of choice. More generally, if $X$ is of bounded variation, then the Neural RDE can be solved using classical Riemann–Stieltjes or Young integration \citep{young1905vi}. 

Of particular interest in the field of stochastic control is the setting where the driving noise is Brownian motion , and the resulting dynamical systems are typically referred to as \emph{stochastic differential equations} (SDEs). Because sample paths from  Brownian motion are not of bounded variation, the integral cannot be interpreted in the classical sense, but rather using the framework of stochastic integration (It\^o, Stratonovich, etc.). The corresponding "neural" version of such models has been the object of several studies \citet{liu2019neural, li2020scalable, kidger2021neural, kidger2021efficient}, in particular in the context of generative modelling for time series. Rough integration \citet{lyons1998differential} is arguably the most general type of integration theory accommodating driving signals $X$ of arbitrary roughness, and in particular non-Markovian processes such as fractional Brownian motion. In this paper, we position ourselves in this general setting. In the appendix, we provide a minimal summary of the basic notions of this theory underpinning the content of this paper.

\section{Problem formulation and methodology}\label{sec:method}

\subsection{Control problems with path-dependent coefficients}

Let us introduce the non-Markovian control problems over closed-loop controls. We fix 
$d, d_a, d_W \in \mathbb{N}$,
a real number $T>0$ and $\mathcal{C}^d:=\mathcal{C}([0,T];\sR^d)$, the space of continuous paths from $[0,T]$ to $\mathbb{R}^d$ endowed with the $\sup$ norm. 
Let $(\Omega, \mathcal{F}, \mathbb{P})$ be a probability space supporting
a $d_W$-dimensional Brownian motion $W=(W_t)_{t\in [0,T]}$, 
and $\mathbb{F}$ be the natural filtration of $W$ augmented with the $\mathbb{P}$-null sets.
Let $\mathcal{H}^2(\sR^{d_a})$ be the space of all  square integrable $\mathbb{F}$-progressively measurable processes, and for each $\alpha \in \mathcal{H}^2(\sR^{d_a})$,
consider the  following controlled state dynamics: for all $t\in [0,T]$,
\begin{equation}\label{eqn:controlled_RDE}
    \D X_t = \mu(t, {X}_{\cdot\wedge t},\alpha_t) \D t + \sigma (t, {X}_{\cdot\wedge t}, \alpha_t)\D W_t,
\end{equation}
where $X_0=x_0$, ${X}_{\cdot\wedge t} = \{X_s\}_{s\in [0,t]}$, and 
$(\mu,\sigma):[0,T]\times \mathcal{C}^d\times \sR^{d_a}\longrightarrow \sR^{d}\times \sR^{d\times d_W}$ are non-anticipative and sufficiently regular mappings 
so that 
\eqref{eqn:controlled_RDE} admits a unique solution 
$X$ in $\mathcal{H}^2(\sR^{d})$.\footnote{This is the case if, for instance, $\mathcal{C}^d\times A\ni(x,a) \longmapsto \varphi(t,x,a)$ has linear growth and is Lipschitz continuous uniformly in $t$ for $\varphi=\mu,\sigma$, see \citet{protter2005stochastic}.}  We denote by $\mathcal{A}$ the  set of admissible controls 
containing all $\alpha\in \mathcal{H}^2(\sR^{d_a})$ that are adapted to the filtration generated by $X$.
Such controls are often referred to as closed-loop, or feedback, controls. 

The agent's goal is to minimise the following cost functional
\begin{equation}\label{eqn:functional}
    J(x_0,\alpha) = \EE\left[\int_0^T f(t,{X}_{\cdot\wedge t},\alpha_t)\D t + g({X}_{\cdot\wedge T})\right]
\end{equation}
over all  
controls  $\alpha \in \mathcal{A}$, where 
$f:[0,T]\times \mathcal{C}^d\times \sR^{d_a}\to \sR$
and $g: \mathcal{C}^d \to \sR$ are given measurable functions.
Note that  
\eqref{eqn:functional}
is a non-Markovian control problem, 
as  the coefficients of the state dynamics and the cost functions depend   on the history of the system state.
Hence, the optimal control process  will also depend on the   entire state trajectory, 
instead of the current system state. 
\subsection{Policy parametrisation via  Neural RDE} 
\label{sec:model-based}
Here, we are going to parameterise the control process $\alpha$ in equation (\ref{eqn:controlled_RDE}) as the solution of a Neural RDE driven by the state process $X$. Let $\ell_\theta:\mathbb{R}^{d_a} \to \mathbb{R}^{d_h}, h_\theta : \mathbb{R}^{d_h} \to \mathbb{R}^{d_h \times d}, A_\theta \in \mathbb{R}^{d \times d_h}$ be (Lipschitz) neural networks. Collectively, they are parameterised by $\theta$. The dimension $d_h>0$ is a
hyperparameter describing the size of the hidden state.

We parameterise controls $\alpha^\theta \in \mathcal{A}$ as solutions to Neural RDEs driven by $X$,
\begin{equation}\label{eqn:neural_CDE}
    Y_0 = \ell_\theta(x_0), \quad
    \D Y_t = h_\theta(Y_t)\D X_t, \quad
    \alpha_t^\theta = A_\theta Y_t.
\end{equation}
With this choice of parameterisation, the dynamics of the joint process $(X,Y)$ are governed by the following \textit{uncontrolled} RDE with structured vector fields
\begin{align} \label{eqn:augmented_RDE}
    \D\begin{pmatrix}
    X \\ Y 
    \end{pmatrix}_t &= 
    \mu\left(t, {X}_{\cdot\wedge t}, A_\theta Y_t\right)\begin{pmatrix}
    1 \\
    h_\theta(Y_t)
    \end{pmatrix} \D t \\
    &\quad + 
    \sigma\left(t, {X}_{\cdot\wedge t}, A_\theta Y_t\right)\begin{pmatrix}
    I_{d} & 0 \\
    0 & h_\theta(Y_t)
    \end{pmatrix} \D \begin{pmatrix} W \\ W  \end{pmatrix}_t \nonumber
\end{align}
Thus, the infinite dimensional minimisation over admissible controls of the reward functional $J$ in equation (\ref{eqn:functional}) can be replaced with the finite-dimensional minimisation over the parameters $\theta$ of the following objective functional
\begin{equation}\label{eqn:goal_functional}
    J(x_0,\alpha^\theta) = \EE\left[\int_0^T f(t,{X}_{\cdot\wedge t},A_\theta Y_t)\D t + g({X}_{\cdot\wedge T})\right],
\end{equation}
Here, we perform this minimisation by first solving numerically the uncontrolled Neural RDE (\ref{eqn:augmented_RDE}) using a classical Euler-Maruyama scheme\footnote{For convergence guarantees of Euler-Maruyama schemes applied to SDEs with path-dependent vector fields we refer the reader to \citet{mao2003numerical}. Other choices of solvers are possible.}; we then use the obtained sample trajectories to compute a Monte-Carlo estimate of the objective functional in (\ref{eqn:goal_functional}), where the integral is approximated using classical quadrature; finally we compute gradients of the estimated objective functional with respect to model parameters $\theta$ and optimise by (stochastic) gradient descent.

Contrary to the approach taken by \citet{han2021recurrent} using an LSTM-parameterisation of the control, our formulation does not rely on any specific discretisation or choice of numerical method. A key feature of Neural RDEs is their robustness to irregular sampling of the data, essentially due to operating continuously in time. The sampled data enters the model only through the construction of the interpolated path, after which the RDE can be solved numerically on any desired grid using adaptive schemes that changes the step size to appropriately resolve the variations in the path. Therefore, because our scheme can be formulated completely in continuous-time and independently of whichever way one chooses to estimate $J(\alpha)$, it is naturally \emph{time-resolution invariant}, in the sense  that even if trained on a coarser resolution it can be directly evaluated on a finer resolution without retraining.

\subsection{Extension to problems with non-Markovian  noises} 

The algorithm proposed in Section \ref{sec:model-based}
can be easily applied  to other types of  non-Markovian control problems. 
For instance, consider minimising the following cost functional:
\begin{equation} 
\label{eqn:loss_fractional}
    J(x_0,\alpha) = \EE\left[\int_0^T f(t,{X}_{  t},\alpha_t)\D t + g({X}_{  T})\right],
\end{equation}
subject to the 
  following state dynamics (cf.~\eqref{eqn:controlled_RDE}): 
for all $t\in [0,T]$,
\begin{equation}\label{eqn:controlled_RDE_fractional}
    \D X_t = \mu(t, {X}_{ t},\alpha_t) \D t + \sigma (t, {X}_{ t}, \alpha_t)\D W^H_t,
\end{equation}
where $(\mu,\sigma):[0,T]\times \sR^d\times \sR^{d_a}\longrightarrow \sR^{d}\times \sR^{d\times d_W}$ are given   functions, 
and $W^H$ is a $d_W$-dimensional fractional Brownian motion with  Hurst index  $H\in (0,1)$ defined by
$$
W^H_t = \frac{1}{\Gamma(H+\frac{1}{2})}\int_0^t (t-s)^{H-\frac{1}{2}}dW_s, 
$$
with $\Gamma(\cdot) $ being the Gamma function.

Note that the distribution of the noise increment $W^H_{t+\delta}-W^H_t$, $\delta>0$, depends on the   trajectory   $(W^H_s)_{s\le t}$, 
and hence the state process $X$ is  non-Markovian, even if all coefficients of 
  \eqref{eqn:loss_fractional}
  and \eqref{eqn:controlled_RDE_fractional}
depend only on the current state and control variables. 
As a result, the optimal feedback control of  
\eqref{eqn:controlled_RDE_fractional}
typically depend on the entire history of the state process (see e.g., \cite{duncan2010stochastic}). 

However, 
the approach in Sec. \ref{sec:model-based}
can be naturally extended to this setting,
as the algorithm directly parameterises the feedback controls
and hence is invariant with respect to different driving noises.
In contrast, as in the case of path-dependent coefficients, the classical methods for Markovian control problems will result in  sub-optimal policies in the  
presence of fractional noises. 

\subsection{Universality of Neural RDEs}\label{sec:universality}

The section  provides theoretical underpinnings for the proposed algorithmic framework by   demonstrating  that Neural RDEs serve as universal approximators for functions of random rough paths.

More precisely, we   prove the density  (in  probability) of linear functionals on the signature of rough paths, which implies the universality of   Neural RDEs (see  \autoref{rmk:universality_RDE}). We refer the reader to \autoref{sub:backRps} for basic backgrounds of rough path theory, which will be used in this section. Let $\alpha \in (0,1]$ and $\bfX \colon \Omega \times [0,T] \to T^{\lfloor 1/\alpha \rfloor}(\bbR^{1+d})$ be a stochastic $\alpha$-H\"older rough path with the property that the zero-th component of its trace is the time coordinate, $X^0_t = t$, and whose higher components that involve the zero-th are defined canonically through Stieltjes integration.
In practice,   $\bfX$ can  represent either the state trajectory, or  the underlying driving (fractional) Brownian noise. 


The following \autoref{thm:univ} is the probabilistic analogue of a well-known universal approximation  property of the deterministic signature, see for instance Proposition 3 in \citet{Ferm21}.\footnotemark
\footnotetext{It is not to be confused with the universality property of the expected signature with respect to functions of distributions on paths, see Theorem 3.2 in \citet{LSDBL21}.}
The main contribution  of \autoref{thm:univ}, compared to its deterministic counterpart (e.g.\ see \citet{kidger2020neural}), is that it avoids imposing a compactness assumption and establishes the universality  in probability.  
As a result, the application of  \autoref{thm:univ} is simplified, as it eliminates the need to select a compact set in the space of $\alpha$-H\"older continuous paths, which   typically entails additional boundedness and smoothness conditions of the paths.


\begin{thm}\label{thm:univ}
	Let $\beta < \alpha$, $\bfX$ as above, and $F \colon  C^\beta([0,T], \bbR^{1+d}) \to \bbR$ be a continuous map. Then for each $\varepsilon, \delta > 0$ there exists a truncation level $N$ and a linear map $\ell \in T^N(\bbR^{1+d})$ such that
	\begin{equation}\label{eq:denseSig}
		\bbP \big[ |F(\bfX) - \langle \ell, S^N(\bfX)_{0T} \rangle| \geq \varepsilon \big] < \delta,
	\end{equation}
 where $S^N(\bfX)$ refers to the truncated signature of $\bfX$  (cf.~\autoref{sub:backRps}).
\end{thm}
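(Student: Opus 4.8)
The plan is to deduce the probabilistic statement from the deterministic, compact-set universality of linear functionals of the signature (Proposition 3 in \citet{Ferm21}) by replacing the compactness hypothesis on the path space with a tightness argument for the single law of $\bfX$. Fix $\varepsilon,\delta>0$.

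First I would exploit inner regularity of the law of $\bfX$. Since $\bfX$ is almost surely a genuine $\alpha$-H\"older rough path, the homogeneous $\alpha$-H\"older rough path norm $\|\bfX\|_\alpha$ is an a.s.\ finite random variable; by continuity of measure from below there is $M>0$ with $\mathbb{P}[\|\bfX\|_\alpha\le M]\ge 1-\delta$. Let $\mathcal{K}_M$ be the collection of $\alpha$-H\"older rough paths over $\mathbb{R}^{1+d}$ whose zeroth component is $t$, started at the fixed initial point, with $\|\cdot\|_\alpha\le M$. The key analytic input is that $\mathcal{K}_M$, although not compact in the $\alpha$-H\"older topology, is compact in the $\beta$-H\"older rough path topology for any $\beta<\alpha$: this is the rough-path analogue of the Arzel\`a--Ascoli/interpolation lemma (a set bounded in $\alpha$-H\"older norm is precompact in $\beta$-H\"older norm, and the sublevel set $\{\|\cdot\|_\alpha\le M\}$ is closed in the $\beta$-topology by lower semicontinuity of the $\alpha$-norm). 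Along the way I would check that $\|\bfX\|_\alpha$ is measurable so that $\{\bfX\in\mathcal{K}_M\}$ is an event.

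Next, on $\mathcal{K}_M$ I would run a Stone--Weierstrass argument. The trace map $\bfX\mapsto X$ is continuous from the $\beta$-H\"older rough path topology into $C^\beta([0,T],\mathbb{R}^{1+d})$, so $F$ restricted to $\mathcal{K}_M$ is continuous, i.e.\ $F|_{\mathcal{K}_M}\in C(\mathcal{K}_M)$; likewise each truncated signature $\bfX\mapsto S^N(\bfX)_{0T}$ is continuous on $\mathcal{K}_M$. The family $\mathcal{L}$ of maps $\bfX\mapsto\langle\ell,S(\bfX)_{0T}\rangle$, with $\ell$ ranging over linear functionals on the tensor algebra, forms a subalgebra of $C(\mathcal{K}_M)$ — products of two such functionals reduce to a single one via the shuffle identity — contains the constants (the zeroth level), and separates points of $\mathcal{K}_M$, because the monotone time coordinate $X^0_t=t$ excludes tree-like pieces and hence the signature is injective on $\mathcal{K}_M$ (uniqueness of the signature of a rough path). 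Stone--Weierstrass then produces a truncation level $N$ and a linear $\ell\in T^N(\mathbb{R}^{1+d})$ with $\sup_{\bfX\in\mathcal{K}_M}|F(\bfX)-\langle\ell,S^N(\bfX)_{0T}\rangle|<\varepsilon$. Finally, the event $\{|F(\bfX)-\langle\ell,S^N(\bfX)_{0T}\rangle|\ge\varepsilon\}$ is contained in $\{\bfX\notin\mathcal{K}_M\}$, whose probability is at most $\delta$, which is \eqref{eq:denseSig}.

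I expect the main obstacle to be the first step: rigorously establishing that sublevel sets of the $\alpha$-H\"older rough path norm are compact in the weaker $\beta$-H\"older topology, and that this norm is a bona fide random variable — i.e.\ making the tightness/inner-regularity argument precise in the rough-path setting rather than in a separable Polish path space (the $\alpha$-H\"older space itself is non-separable, which is exactly why one passes to $\beta<\alpha$). A secondary delicate point is the point-separation clause of Stone--Weierstrass, which hinges on injectivity of the signature on paths carrying a monotone coordinate, and on verifying that $F$ — a priori a function of the trace only — descends to a genuinely continuous function on the compact set of rough paths $\mathcal{K}_M$.
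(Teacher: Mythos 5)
Your proposal is correct and follows essentially the same route as the paper's proof: the same tightness step (a closed $\alpha$-H\"older ball of probability $\ge 1-\delta$, obtained by continuity of measure from below), the same key compactness lemma (the $\alpha$-ball is compact in the weaker $\beta$-H\"older topology, which the paper takes from Proposition 8.17(ii) of \citet{FV10}), and the same Stone--Weierstrass argument with point separation guaranteed by the time coordinate. The only cosmetic difference is that in the paper $F$ is a continuous function of the full rough path in $C^\beta$ rather than of its trace, which if anything removes the last of your worries.
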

\begin{proof}
		Let $D^\alpha_r$ be the closed disk centred at the $0$ rough path, of radius $r>0$ in $C^\alpha([0,T],\bbR^{1+d})$. Since $\bfX$ is a.s.\ $\alpha$-H\"older continuous in the rough path sense 
		\begin{align*}
		\lim_{r \to \infty} \bbP[\bfX \in D^\alpha_r] &= \bbP\Big[ \bigcup_{r > 0} \bfX^{-1}( D^\alpha_r) \Big] \\&= \bbP\Big[ \bfX^{-1}(\bigcup_{r > 0} D^\alpha_r) \Big] \\&= \mathbb P[\bfX \in C^\alpha([0,T],\bbR^{1+d})] \\&= 1
		\end{align*}
		and thus given $\delta$ as in the statement there exists $r$ s.t.\ $\bbP[\bfX \in D^\alpha_r] > 1-\delta$. By Proposition 8.17 (ii) in \citet{FV10}, for any $\beta < \alpha$, $D^\alpha_r$ is sequentially compact in $ C^\beta([0,T],\bbR^{1+d})$, and thus compact since this is a metric space. Let $\widetilde D^\alpha_r$ be the intersection of $D^\alpha_r$ with the aforementioned set of rough paths in $C^\alpha([0,T],\bbR^{1+d})$ whose zero-th coordinate is time $t$; this is a closed set and thus $\widetilde D^\alpha_r$ is still compact. Thanks to the inclusion of the time coordinate, linear functions on the signature separate points in $\widetilde D^\alpha_r$, and by the Stone-Weierstrass theorem applied to $F|_{\widetilde D^\alpha_r}$ there exist $N$ and $\ell$ as in the statement s.t.\ $|F(\bfX(\omega)) - \langle \ell, S^N(\bfX(\omega))_{0T} \rangle| < \varepsilon$ for $\omega \in \Omega$ s.t.\ $\bfX(\omega) \in D^\alpha_r$, and the conclusion now follows.
	\end{proof}

The choice for $F$ that we have in mind is the solution map of an SDE, which can be equivalently viewed as  an RDE driven by the enhanced (fractional) Brownian rough path. The reduction  of the H\"older exponent from $\alpha$ to $\beta$ is due to  a technical step in the proof, 
but this adjustment does not impact the validity of applying the theorem with this specific choice of $F$.

\begin{rem}[Universal approximations of Neural RDEs]
\label{rmk:universality_RDE}
It follows immediately from \autoref{thm:univ} and  the fact that the signature of a stochastic process satisfies a linear SDE (see \eqref{eq:sigRDE})
that Neural RDEs parametrised by feedforward neural networks with linear activation functions are dense in probability (in the sense of \eqref{eq:denseSig}) in the space of  all continuous functions on driving rough paths. 
 This indicates that smooth (and indeed linear) functions of solutions to
Neural RDEs can well-approximate the optimal controls of a non-Markovian control problem, 
as  the   optimal   strategies  typically  depend continuously on  the driven noise. In practice, one can expect superior performance (e.g.\ lower dimensions involved) when using non-linear activations, even though it is not needed for the theoretical result. This is because the non-linearity is already contained in operation of solving the SDE.
\end{rem}

\begin{rem}
    We would like to thank Terry Lyons for pointing out a shortcut to the proof of \autoref{thm:univ}, valid in the case of Brownian rough paths, which goes as follows: the Stratonovich rough path lift $X \mapsto \bfX$ is measurable, and thus by Lusin's theorem, for any $\delta > 0$ continuous on a compact set $K$ of probability $1-\delta$. We may then apply the classical Stone-Weierstrass theorem for signatures of rough paths with trace valued in $K$ to obtain the result.
\end{rem}

\section{Experiments}\label{sec:experiments}

We present a number of numerical experiments demonstrating the capabilities of our method to compute approximate solutions of non-Markovian stochastic control problems in continuous-time. We benchmark the performance of our approach against a selection of alternative RNN-based models parameterising the feed-back control \citet{han2021recurrent}. This choice of benchmarks is motivated mainly by the fact that this class of models, due to the connection between neural RDEs and RNNs, are the closest currently available alternatives to our method. The three alternative architectures we consider are: 1) RNN, 2) Long Short-Term Memory (LSTM), and 3) a Gated Recurrent Unit (GRU).  

One key feature of the proposed model that we wish to study empirically is the time-resolution invariance discussed at the end of Sec. \ref{sec:model-based}. Concretely, to test their robustness to changes in time-resolution, we train all models on a coarser time grid and then we evaluate them on a finer grid. Such a property is desirable both from the perspective of efficient training under computational budget constraints, and as an indication that the model is in fact learning a solution to the actual continuous-time problem. Lastly, a learned control that is heavily dependent on the grid that it was trained on may not be suitable for practical use; in such a case, this invariance property is critical.

Another performance criterion we will be using is the pathwise $L^2$ error between the state trajectories obtained using the NCDE control strategy and the ones obtained using the theoretical optimal control. The lower this error, the closer the trajectories sampled from the learnt state process to the trajectories of the theoretical state process.

All models are trained by sampling batches of trajectories from the state process under the parametric control, of the form given by \eqref{eqn:augmented_RDE}, and then performing direct backpropagation using an Adam optimiser \citep{kingma2014adam} to minimise the Monte-Carlo estimate of the value of the reward functional $J(\alpha^\theta)$ in \eqref{eqn:goal_functional}. For each experiment, the grid on which the system is simulated and the number of sample trajectories used to train and evaluate all models are kept the same. For a fair comparison, the hyperparameters of each model are adjusted such that the models all have an approximately equal number of trainable parameters. All experiments are implemented using version 1.11.0 of PyTorch and run on an NVIDIA Tesla K80 GPU. Additional experimental details can be found in the appendix.



\subsection{Stochastic control problem with delay}\label{sec:LQ}
We consider first the example of a linear-quadratic problem with delay, also used by \citet{han2021recurrent}. Linear-quadratic control problems are widely used to address real-world challenges. An example from the mathematical finance community is the control of intraday fill ratios when volatility is stochastic; see  \citet{cartea2021shadow}.  In their paper, the control affects the state dynamics linearly and the performance criterion is composed of a square running penalty on the control and a square running penalty on one of the entries of the state process. A detailed discussion of this problem and how an explicit solution may be obtained is given by \citet{bauer2005stochastic}. 

With notation as before, the dynamics of the state process $X$ under a control $\alpha$ are given by
\begin{equation}\label{eq:experiments:dlqsde}
    \begin{aligned}
        \D X_t &= (A_1X_t + A_2Y_t + A_3 X_{t-\delta} + B \alpha_t)\D t + \sigma \D W_t, 
    \end{aligned}
\end{equation}
for $t\in [0,T]$ with $X_t = \phi$ for $t\in[-\delta, 0]$,  $\delta > 0$ a delay parameter, the distributed delay satisfying
\begin{equation*}
    Y_t := \int_{-\delta}^0 e^{\lambda \xi} h(X_{t+\xi})\D \xi, \quad t\in[0,T]
\end{equation*}
and the goal functional that we seek to minimise
\begin{align*}
    J(\alpha) &= \E\bigg[\int_0^T \Big(Z_t^\top Q Z_t + \alpha^\top_t R\alpha_t \Big)\D t + Z_T^\top G Z_T\bigg], \\
     Z_t :&= (X_t + e^{\lambda \delta}A_3Y_t),\ t\in[0,T].
\end{align*}
The parameters $A_1,A_2,A_3 \in \R^{d\times d}$, $B\in\R^{d\times d_\alpha}$, $\sigma\in \R^{d\times d_W}$, 
$Q,G\in \R^{d\times d}$,
$R\in\R^{d\times d}$,
$\lambda,\delta$ and $T$ are all taken to be the same values as those used by \citet{han2021recurrent}. In particular, the problem is considered in 10 dimensions in state, noise and control, $Q,R,G$ are proportional to identity matrices, the elements of $A_1,A_3,B$ and $\sigma$ are selected randomly and $A_2$ is determined by a condition guaranteeing an explicit solution. We refer to \citet{han2021recurrent} for further details. The constant initial condition $\phi$ is taken to be zero. The explicit value function and optimal control are obtained in terms of the solution to an associated Riccati equation, which can be solved numerically.

\begin{table}[h]
    \centering
    \caption{\small \textbf{Linear-quadratic problem with delay}. Final estimate of the goal functional on the evaluation grid. Lower indicates smaller error. Analytical value: 2.231. Training resolution is given as a percentage of the evaluation resolution of 80 time steps.}
    \label{tab:experiments:delay}
    \vspace{2mm}
    \begin{tabular}{l c c c c}\toprule
         & \multicolumn{4}{c}{Training Resolution\vspace{1mm}}\\
        Model & $100\%$ & $50\%$ & $25\%$ & $12.5\%$ \\\midrule
        RNN & 2.493 & 5.162 & 8.870 & 7.600 \\
        LSTM & 2.357 & 7.323 & 5.888 & 6.863 \\
        GRU & 2.356 & 2.830 & 7.311 & 18.70 \\\midrule
        Neural RDE (ours) & 2.358 & 2.457 & 2.509 & 2.803 \\
        \bottomrule
    \end{tabular}
\end{table}

The results for this experiment are shown in table \ref{tab:experiments:delay}. We see that, trained at full resolution, the LSTM, GRU and Neural RDE models all perform approximately as well. However, when the training grid is made coarser, the Neural RDE model remains relatively stable with only slight increases in error, dramatically outperforming the benchmark models whose performance rapidly deteriorates. ~\\

\vspace{-0.5cm}

\subsection{Stochastic control problem driven by Fractional Brownian Motion}\label{sec:fBM}
Next, we demonstrate the application of the proposed method to a problem with non-Markovianity stemming  from correlated noise increments by considering a linear-quadratic problem driven by fractional Brownian motion. The dynamics for the state $X$ are given by
\begin{equation}
    \D X_t = (A X_t + C \alpha_t)\D t + \sigma\D W^H_t,\quad t\in [0,T], \qquad X_0 = 0
\end{equation}
where $A\in\R^{d\times d}$, $C\in\R^{d\times d_a}$, $\sigma\in\R^{d\times d_W}$ are parameters and $W^H$ is a $d_W$-dimensional fractional Brownian motion with components with Hurst parameters $H \in (0,1)$ (assumed the same across all $d_W$ channels). We choose the Hurst parameter $H=0.3$, so as to highlight the applicability of the method also in the case where solution paths are rougher than Brownian motion ($H=0.5$). 

The quadratic cost functional is as follows
\begin{equation}
    J(\alpha) = \frac{1}{2} \E \left[\int_0^T \big(X_s^\top Q X_s + \alpha_s^\top R \alpha_s\big) \D s + X_T^\top G X_T \right],
\end{equation}
where $Q,R,G$ are symmetric and positive definite. We consider the problem specifically in two dimensions in both state and control and take $T=1$, $\sigma= I$, 
\begin{align*}
    A &= \frac{1}{10}\begin{pmatrix}12 & 2\\ 2 & 12\end{pmatrix}, \quad C = \frac{1}{10}\begin{pmatrix}15 & -3\\ -3 & 15\end{pmatrix}, \\
    Q&=R=G = \frac{1}{10}I.
\end{align*}

 \vspace{-0.5cm}

 \begin{table}[h]
    \centering
    \caption{\small \textbf{Linear-quadratic problem driven by fractional Brownian motion}. Final estimate of the goal functional on the evaluation grid. Lower indicates smaller error. Training resolution is given as a percentage of the evaluation resolution of 40 time steps.}
    \label{tab:experiments:fbm}
    \vspace{2mm}
    \begin{tabular}{l c c c c}\toprule
         & \multicolumn{4}{c}{Training Resolution\vspace{1mm}}\\
        Model & $100\%$ & $50\%$ & $25\%$ & $12.5\%$ \\\midrule
        RNN & 0.923 & 0.911 & 1.246 & 2.785 \\
        LSTM & 0.873 & 0.961 & 1.779 & 3.422 \\
        GRU & 0.891 & 0.925 & 1.236 & 2.791 \\\midrule
        Neural RDE (ours) & 0.896 & 0.902 & 0.927 & 1.104 \\
        \bottomrule
    \end{tabular}
 \end{table}

Table \ref{tab:experiments:fbm} shows the results for this experiment. We observe comparable performance between the LSTM, GRU and Neural RDE models at full training resolution, but with the Neural RDE significantly outperforming the other models when training resolution is decreased. At $12.5\%$ of evaluation resolution, the models are trained on simulations using just five time steps; nevertheless, the Neural RDE appears to produce reasonable results with an error compared to the full resolution case more than one order of magnitude smaller than for the other models.

\subsection{Portfolio optimisation problem with complete memory}\label{sec:portfolio}

We consider a portfolio optimisation problem with complete memory also studied in \citet{han2021recurrent}. A detailed analysis of this problem including derivations of explicit solutions under exponential, power and log utilities is given in \citet{pang2017stochastic}. Here, the state process $X_t$ represents the wealth of an investor and the $\alpha_t = (\alpha^1_t, \alpha_t^2)$ is a $2$-dimensional control process, where $\alpha^1_t$ is the amount of investment and $\alpha^2_t$ is the consumption of the underlying asset, i.e. the fraction of wealth consumed at time $t$. The dynamics are given, for $t\in [0,T]$, by
\begin{align}\label{eq:experiments:portfolio}
\D X_t &= (((\mu_1 - r)\alpha^1_t - \alpha^2_t+ r)X_t + \mu_2 Y_t)\D t \\
&\hspace{4cm} + \sigma \alpha^1_tX_t\D W_t,\nonumber\\
        Y_t :&= \int_{-\infty}^0 e^{\lambda \xi} X_{t + \xi}\D \xi,\nonumber
\end{align}
with $X_0 = \phi(0)$, $Y_0 = \int_{-\infty}^0 e^{\lambda \xi} \phi(t + \xi) \D \xi$, for some square integrable function $\phi$. The goal functional that we seem to maximise is as follows
\begin{align*}
    J(\alpha) &= \E\bigg[\int_0^T  e^{-\beta t} U_1(\alpha^2_tX_t) \D t  + e^{-\beta T U_2(X_T,Y_T)} \bigg], 
\end{align*}
where $U_1(x) = \log(x), U_2(x,y) = \frac{1}{\beta} \log(x + \eta y), \eta = \frac{1}{2}(\sqrt{(r + \lambda^2) + 4 \mu_2} - (r + \lambda))$.
As in the previous example all the parameters are taken to be the same as in \citet{han2021recurrent}.

 \begin{table}[h]
    \centering
    \caption{\small \textbf{Portfolio optimisation with complete memory}. Relative difference between the estimated and the theoretical goal functionals as well as relative pathwise $L^2$ error between the true and estimated process trajectories. Lower indicates smaller error.}
    \label{tab:experiments:portfolio}
    \vspace{2mm}
    \begin{tabular}{l c c}\toprule
         & \multicolumn{2}{c}{Relative errors ($\times 10^{-3}$) \vspace{1mm}}\\
        Model & Goal functional & Pathwise $L^2$  \\\midrule
        RNN & 0.262 & 0.555  \\
        LSTM & 1.034 & 2.929  \\
        GRU & 0.541 & 1.116 \\\midrule
        Neural RDE (ours) & \textbf{0.238} & \textbf{0.043}  \\
        \bottomrule
    \end{tabular}
 \end{table}

The results for this experiment are shown in table \ref{tab:experiments:portfolio}, where we report the relative difference between the estimated and the theoretical goal functionals as well as relative pathwise $L^2$ error between the true and estimated process trajectories. We can see that the Neural RDE model slightly outperforms all alternative models on the relative difference of goal functionals and outperforms the second best model by one order of magnitude on the pathwise $L^2$ error. ~\\

\vspace{-0.5cm}

\section{Conclusion}

We proposed a framework for solving non-Markovian stochastic control problems continuous-time leveraging Neural RDEs. The main idea consists in parameterising the control process as the solution of a Neural RDE driven by the state process, so that the control-state joint dynamics are governed by an uncontrolled RDE with vector fields parameterised by neural networks. To deal with input paths of infinite 1-variation, we prove Theorem \ref{thm:univ}  which extends the universal approximation result in \citet{kidger2020neural} to Neural RDEs driven by random rough paths. We showcased the time-resolution-invariance of our approach on various non-Markovian problems, achieving better performance than traditional RNN-based approaches. 
A natural next step is to enhance the algorithm's efficiency by exploring the relationship between non-Markovian control problems and path-dependent  PDEs and BSDEs.

\bibliography{references}
\bibliographystyle{icml2023}

\newpage
\appendix
\onecolumn

\section{Appendix}

\subsection{Background on rough path theory}\label{sub:backRps}

The purpose of this appendix is to give an informal and very concise introduction to rough paths, their signatures, and their applications to machine learning.

For the abstract theory of rough paths we refer to \citet{FV10,FH20}. An $\alpha$-H\"older \emph{rough path} $\bfX$ consists of an $\alpha$-H\"older continuous path $X \colon [0,T] \to \bbR^d$ (the \emph{trace} of $\bfX$) together with a collection of higher-order functions defined on the simplex $\Delta[0,T] \coloneqq \{ (s,t) \in [0,T]^2 \mid 0 \leq s \leq t \leq T\}$ which represent, in a precise algebraic and analytic sense, iterated integrals of $X$ against itself. When $X$ is smooth or of bounded variation, such integrals can be defined canonically in the usual Stieltjes sense, and similarly when $X$ is $1/2 < \alpha$-H\"older continuous they can be defined via Young integration. However, when $\alpha \leq 1/2$ there is no canonical way of defining them, and if $X$ is a stochastic process, $\bfX$ is often defined through some notion of stochastic integration such as It\^o or Stratonovich. $\bfX$ takes values in $T^{\lfloor 1/\alpha \rfloor}(\bbR^d)$, where $T^N(\bbR^d) \coloneqq \bigoplus_{n = 0}^N (\bbR^d)^{\otimes n}$ denotes the tensor algebra over $\bbR^d$ truncated at level $N$ and $\lfloor \cdot \rfloor$ is the floor function: this  means, the rougher $X$ is, the more terms $\bfX$ must contain. Once such terms are defined, the \emph{signature} $S(\bfX)$ of $\bfX$ is canonically defined through well-known notions of path integration. $S(\bfX)$ is a map $\Delta[0,T] \to T(\!(\bbR^d)\!)$ (the algebra of formal series of tensors), and when $\alpha > 1/2$ it is canonically defined by Young integration as
\[
S(\bfX)_{st}^{(n)} \coloneqq \int_{s < u_1 < \ldots < u_n < t} \D X_{u_1} \otimes \cdots \otimes \D X_{u_n}
\]
where the superscript $(n)$ denotes projection onto $(\bbR^d)^{\otimes n}$. When $\alpha \leq 1/2$ the whole of $\bfX$, not just the trace $X$, is needed to define $S(\bfX)$, and $S(\bfX)^{(n)} = \bfX^{(n)}$ for $n \leq \lfloor 1/\alpha \rfloor$. We will denote $ C^\alpha([0,T],\bbR^d)$ the metrisable topological space of $\alpha$-H\"older rough paths taking values in $\bbR^d$ with time horizon $T$: this is what \citet{FV10} call $C^{\alpha\text{-H\"ol}}([0,T],G^{\lfloor 1/\alpha \rfloor}(\bbR^d))$; in \citet{FH20} (which only treats the case of $\alpha > 1/3$, nevertheless sufficient for Brownian motion, which is $\alpha$-H\"older regular for any $\alpha < 1/2$) this space is denoted $ C^\alpha_g([0,T],\bbR^d)$, the superscript $g$ standing for ``geometric''. Geometric rough paths are those which satisfy integration by parts relations, and are the only ones considered here; for example, It\^o and Stratonovich integration both define rough paths above Brownian motion, but only the latter is geometric. This is not an issue when considering It\^o SDEs, however, which can canonically be rewritten in Stratonovich form. The main example of rough path that we will consider is the Stratonovich Brownian rough path augmented with time: if $W$ is a $d$-dimensional Brownian motion, we take $\alpha$ to be any real number in $(1/3,1/2)$ and for $i,j = 1,\ldots d$ we let $\boldsymbol W^{ij}_{st} \coloneqq \int_{s < u < v < t} \circ\D W^i_u \circ\! \D W^j_v$, where $\circ \D W$ denotes Stratonovich integration. Time will take the zero-th coordinate, which means that when $i$ or $j$ above is $0$, the integral is defined through standard Young/Stieltjes integration.

The main purpose of rough path theory is to give meaning to \emph{rough differential equations} (RDEs) $\D Y = V(Y) \D \bfX$ which, in addition to having usual existence and uniqueness theorems, have the property that the solution map $\bfX \mapsto Y$ is continuous. This is not the case when considering SDEs: the map sending the Brownian sample path to the corresponding path of the solution, though defined on a set of full measure and measurable, is not continuous. An important RDE is the one satisfied by the signature itself on $T(\!(\bbR^d)\!)$: given a rough path $\bfX$ it holds that 
\begin{equation}\label{eq:sigRDE}
\D S(\bfX)_{0t} = S(\bfX)_{0t} \otimes \D \bfX_t
\end{equation}
The study of signatures is somewhat independent from that of rough paths, and is interesting even in the case of smooth or bounded variation paths (in which case $\bfX = X$). The main property of interest of the signature, established in \citet{HL10} (and extended to the full rough path case in \citet{BGLY16}), is that, for paths of bounded variation, the series of tensors $S(X)_{0T}$ determines the path $X$ up to \emph{treelike equivalence}. Roughly speaking, the latter means that if two paths $X$, $Y$ are such that $X \star \overleftarrow{Y}$ --- with $\star$ denoting path concatenation and $\overleftarrow{\phantom{Y}}$ denoting path inversion --- is a path that retraces itself and returns to the starting point, then the signature will not distinguish them: $S(X)_{0T} = S(Y)_{0T}$. We will write $X \sim Y$ for treelike equivalence (and similarly $\bfX \sim \boldsymbol Y$ in the generalised rough path sense of \citet{BGLY16}), and note that this includes (but is not limited to) the case in which $Y$ is a reparameterisation of $X$. ``Generic'' paths $\bbR^d$ valued paths can be expected not to be tree-like (i.e.\ not to retrace themselves) when $d > 1$; for example, in \citet{LQ13} it was shown that Brownian rough paths in dimension $2$ or greater a.s.\ do not contain tree-like pieces.

The result of \citet{HL10} is a powerful statement that makes it possible to understand a path $X \colon [0,T] \to \bbR^d$ in terms of the series of tensors $S(X)_{0T}$. What's more, the signature has the property of ``linearising'' all functions on paths: any non-linear function of $X$ can be approximately expressed as a linear functional on $S(X)$. A precise version of this statement in the random rough path case is proved in \ref{thm:univ} below. A fundamental ingredient for proving this type of result is the Stone-Weierstrass theorem: given a compact Hausdorff topological space $K$ and a subalgebra $A$ of $C(K,\bbR)$ which contains a non-zero constant function and separates points (this means that for any two distinct $x,y \in K$ there exists $a \in A$ s.t.\ $a(x) \neq a(y)$), it holds that $A$ is dense in $C(K,\bbR)$. The prototypical application of this theorem is the proof of density of polynomials in $C([a,b],\bbR)$. An important property that makes it possible to apply it to signatures is that linear functions on the signature, just like polynomials, form an algebra: if $\ell_1,\ell_2 \colon T(\!(\bbR^d)\!)^* = T(\bbR^d) \to \bbR$ are linear maps then
\[
\langle \ell_1,S(\bfX)_{0T} \rangle \langle \ell_1,S(\bfX)_{0T} \rangle = \langle \ell_1 \shuffle \ell_2 ,S(\bfX)_{0T} \rangle
\]
where $\shuffle$ is the combinatorial operation of shuffling. This relation can be understood as a generalised integration by parts relation, as can be seen by taking $\ell_1$ and $\ell_2$ to be evaluations against elementary tensors: in this case (and $X$ of bounded variation) the above identity reads
\begin{align*}
	&\mathrel{\phantom{=}}\bigg( \int_{s < u_1 < \ldots < u_n < t} \D X^{i_1}_{u_1} \cdots \D X^{i_m}_{u_m} \bigg) \bigg( \int_{s < v_1 < \ldots < v_n < t} \D X^{j_1}_{v_1} \cdots \D X^{j_n}_{v_n} \bigg) \\
	&= \sum_{\boldsymbol k \in \text{Sh}(\boldsymbol i, \boldsymbol j)} \int_{s < r_1 < \ldots < r_{n+m} < t} \D X^{k_1}_{r_1} \cdots \D X^{k_{m+n}}_{r_{m+n}}
\end{align*}
where we are summing over all multiindices $\boldsymbol k$ obtained by shuffling the multiindices $(i_1,\ldots,i_m)$ and $(j_1,\ldots,j_m)$. For these reasons, signatures have been extensively used for in the context of machine learning for time series, see e.g.\ \citet{CK16, fermThesis}. 

\subsection{Additional experimental details}

In this final section of the appendix we present additional experimental details.

\paragraph{Stochastic control problem with delay (sec. \ref{sec:LQ})} Trajectories of  \eqref{eq:experiments:dlqsde} are simulated using an Euler-Maruyama-type scheme on a uniform grid. All models are trained over 300 batches of 256 sample trajectories simulated on grid with 80, 40, 20 and 10 time steps. The final evaluation estimates of the goal functional are computed using 4096 sample trajectories simulated on a grid with 80 time steps. The dimension of the hidden states in the baseline models are: 400 for the RNN, 200 for the LSTM and 230 for the GRU. The latent dimension of the Neural RDE model is 200 and the vector field and initial lift are parameterised by fully connected feed-forward neural networks with two hidden layers of width $64$. We take activations given by elementwise application of the SiLU function $x \mapsto \tfrac{x}{1+e^{-x}}$ and apply a final $\tanh$ non-linearity to the outputs to prevent unreasonably large values and initial losses.

\paragraph{Stochastic control problem driven by fractional Brownian motion (sec. \ref{sec:fBM})} Trajectories are simulated using an Euler-Maruyama scheme with increments of fractional Brownian motion sampled using the Python package fbm \citep{fBM}. We use uniform grids with 40 time steps for evaluation and 40, 20, 10 and 5 steps for training. All models are trained over 300 batches of 256 sample trajectories. The final evaluation estimates of the value of the goal functional are computed using 4096 sample trajectories. The dimension of the hidden states in the baseline models are: 250 for the RNN, 130 for the LSTM and 150 for the GRU. The latent dimension of the Neural RDE model is 200 and the vector field and initial lift are parameterised by feed-forward neural networks with two hidden layers of width $64$ respectively.

\paragraph{Portfolio optimisation problem with complete memory (sec. \ref{sec:portfolio})} We simulate trajectories of  \eqref{eq:experiments:portfolio} using an Euler-Maruyama-type scheme on a uniform grid of $200$ time steps. All models are trained over $500$ batches of $256$ sample trajectories (similarly for evaluation). The dimension of the hidden states in the baseline models are: $600$ for the RNN, $300$ for the LSTM and $300$ for the GRU. The latent dimension of the Neural RDE model is $350$ and the vector field and initial lift are parameterised by  feed-forward neural networks with two hidden layers of width $128$.

\end{document}